\documentclass[12pt]{article}

\usepackage{amsmath, amssymb, amsthm}
\usepackage{optidef}
\usepackage[backend=biber, style=authoryear-icomp, maxcitenames=2, maxbibnames=99, ibidtracker=false]{biblatex}
\usepackage{hyperref}
\usepackage{bm}
\usepackage{bbm}
\usepackage{graphicx}
\usepackage{booktabs}
\usepackage[a4paper, width=150mm, top=30mm, bottom=30mm]{geometry}
\usepackage{setspace}
\usepackage{abstract}
\usepackage{xcolor}

\DeclareMathOperator*{\argmin}{arg\,min}
\DeclareMathOperator{\var}{Var}

\newtheorem{theorem}{Theorem}[section]
\newtheorem{proposition}[theorem]{Proposition}
\newtheorem{corollary}[theorem]{Corollary}

\theoremstyle{definition}

\newtheorem{remark}[theorem]{Remark}

\begin{document}

\title{\large\textbf{A Mathematical Optimization Approach for Expert-Informed Bayesian Best Subset Selection}}

\author{%
    \begin{tabular}[t]{c@{\hspace{3em}}c}
        \textbf{Nolan Alexander} & \textbf{Henning Mortveit} \\
        Systems and Information & Systems and Information \\
        Engineering & Engineering \\
        University of Virginia & University of Virginia \\
        \texttt{nka5we@virginia.edu} & \texttt{hsm2v@virginia.edu}
    \end{tabular}
}

\date{June 2026}
\maketitle

\begin{abstract}
\noindent
A central challenge in statistical modeling is identifying the subset of features that belong in the true regression model. The classical best subset selection problem, recently made tractable via mixed-integer optimization (MIO), finds the globally optimal sparse solution. It does not, however, make use of any information beyond the observed data. In many applied settings, domain experts can meaningfully rank or score the relevance of candidate predictors, yet no existing framework integrates such probabilistic expert assessments directly into the best-subsets objective. This paper presents \emph{Expert-Implied Bayesian Best Subsets} (EBBS), a method that incorporates domain-expert probability estimates of feature relevance into the MIO best-subsets problem through a maximum \textit{a posteriori}(MAP) framework. Expert views from multiple respondents are aggregated into a single prior probability per feature using the Poisson binomial distribution for marginal probability estimates, the pairwise win rate for pairwise comparisons, or the normalized mean rank for ordinal rankings. This probability enters the objective function as a log-odds penalty term that smoothly encourages or discourages the selection of each feature consistent with the expert consensus. This paper provides analytic derivations of the MAP formulation and characterizes its theoretical properties. The proposed model reduces to Best Subsets when experts all have no views. Empirical results on synthetic and real datasets are forthcoming.
\end{abstract}

\vspace{0.5em}
\noindent
\textbf{Keywords:} best subset selection, mixed-integer optimization, feature selection, Bayesian inference, maximum a posteriori, domain knowledge integration, sparse linear regression.

\section{Introduction}
\label{sec:intro}

We consider the linear regression model $\bm{y} = \bm{X}\boldsymbol{\beta} + \boldsymbol{\varepsilon}$, where $\bm{y} \in \mathbb{R}^n$ is the response vector, $\bm{X} \in \mathbb{R}^{n \times p}$ is the design matrix, $\boldsymbol{\beta} \in \mathbb{R}^p$ are regression coefficients, and~$\boldsymbol{\varepsilon} \in \mathbb{R}^n$ are errors. In many classical and modern applications, it is desirable to obtain a parsimonious fit by identifying the $k$-feature model that best explains~$\bm{y}$, especially in the high-dimensional regime $p \gg n$ where the true $\boldsymbol{\beta}$ is assumed to be sparse. On the other hand, practitioners in fields such as medicine, epidemiology, and financial economics frequently possess informed prior beliefs about which predictors are likely to matter. Purely data-driven selection cannot exploit such beliefs.

\paragraph{Brief context and background.}
There are three main classes of feature selection methods: automated, manual, and hybrid methods. Among automated approaches, there are four sub-classes: heuristics, regularization, information-theoretic methods, and exact solutions.

Heuristic stepwise methods \parencite{efroymsonMultipleRegressionAnalysis1960} iteratively add or remove predictors based on test statistics. They tend to underperform global optimization approaches because of their greedy nature. Regularization methods, including Ridge \parencite{hoerlRidgeRegressionApplications1970}, Lasso \parencite{tibshiraniRegressionShrinkageSelection1996}, Elastic Net \parencite{zouRegularizationVariableSelection2005}, and extensions such as the Relaxed Lasso \parencite{meinshausenRelaxedLasso2007}, Adaptive Lasso \parencite{zouAdaptiveLassoIts2006}, and SCAD \parencite{fanVariableSelectionNonconcave2001}, modify the objective with continuous penalties that induce sparsity through shrinkage of coefficients. Information-theoretic approaches such as mRMR \parencite{pengFeatureSelectionBased2005} and QPFS \parencite{rodriguez-lujanQuadraticProgrammingFeature} select features by maximizing relevance, often measured as the correlation to the predictand, while minimizing redundancy, often measured as the cross-correlation between predictors. Both regularization and information-theoretic approaches optimize a different objective than the standard regression sum of squared errors. As such, they offer no guarantee of finding the globally optimal sparse subset for standard regression. The exact approach, best subset selection, finds the globally optimal $k$-feature model by directly minimizing the residual sum of squares subject to a hard cardinality constraint. \textcite{bertsimasBestSubsetSelection2016} showed that Best Subset Selection can be framed as a mixed-integer program (MIP) and solved to provable optimality at problem sizes with $n$ in the thousands and $p$ in the hundreds. \textcite{hazimehFastBestSubset2020} subsequently developed coordinate-descent and local combinatorial algorithms to further improve computational speed.

Bayesian approaches to variable selection place priors on binary inclusion indicators and use Markov Chain Monte Carlo (MCMC) to explore the posterior over models. \textcite{georgeApproachesBayesianVariable1997} developed hierarchical mixture priors with independent Bernoulli inclusion indicators and Gibbs sampling. \textcite{kuoVariableSelectionRegression1998} embed indicators directly into the regression equation, yielding a spike-and-slab prior on the effective coefficients. \textcite{kowalBayesianSubsetSelection2022a} take a decision-analytic approach, deriving optimal linear coefficients for any subset from a continuous shrinkage model and collecting a family of near-optimal subsets via predictive evaluation. All three rely on MCMC for posterior computation. None incorporate domain-expert knowledge into the inclusion prior, and none produce a globally optimal sparse solution via direct optimization.

Several recent methods integrate domain knowledge with automated selection, though all do so by modifying penalized regression objectives. \textcite{zengIncorporatingPriorKnowledge2021} extend the Lasso with empirical-Bayes penalty parameters for each feature. \textcite{wuDomainKnowledgeenhancedVariable2022} develop an algorithm that can incorporate and reject domain knowledge in feature selection. \textcite{destouchesFeatureSelectionPrior2023} use iterative input rescaling in support vector regression. The method requires priors on $\boldsymbol{\beta}$ directly, which can be difficult for practitioners to specify. \textcite{zhangLLMLassoRobustFramework2025} leverage large language models to generate feature-level penalty weights for the Lasso via retrieval-augmented generation, with an internal cross-validation step that bounds performance relative to standard Lasso. All of these methods inherit the limitations of the Lasso family: they do not find the globally optimal sparse subset for standard regression. None can be interpreted as a MAP estimate under a clearly specified prior on the binary selection indicators.

\paragraph{Our approach.}
We propose \emph{Expert-Implied Bayesian Best Subsets} (EBBS), a framework that extends the MIP formulation of \textcite{bertsimasBestSubsetSelection2016} to incorporate probabilistic domain-expert views through a MAP framework. Each expert $i$ provides probability estimates $e_{i,a} \in [0,1]$ of whether feature $a$ belongs in the true model, where $0.5$ encodes no view, $1$ encodes certainty of inclusion, and $0$ encodes certainty of exclusion. These individual estimates are aggregated into a single per-feature probability $\bar{e}_a$. Depending on the elicitation format, this uses the Poisson Binomial distribution for marginal probability estimates, the pairwise win rate for pairwise comparisons, or the normalized mean rank for ordinal rankings. We then place a Bernoulli prior with parameter $\bar{e}_a$ on the binary selection indicator $z_a$. Deriving the MAP estimator yields a MIP that is identical in structure to the best-subsets MIP with an additional log-odds term $-\bm{z}^\top \log(\bm{\bar{e}} / (\mathbf{1} - \bm{\bar{e}}))$ in the objective. This term is computed once from the elicited views and enters the solver as a fixed coefficient vector. All warm-starting and MIO advances of \textcite{bertsimasBestSubsetSelection2016} apply without modification. When $\bar{e}_a = 0.5$ for all features, EBBS reduces to standard best subsets.

\paragraph{Contributions.}
We summarize our contributions below:
\begin{enumerate}
    \item We derive a MAP formulation of best subset selection that incorporates probabilistic domain-expert views on feature relevance, yielding a MIP that extends \textcite{bertsimasBestSubsetSelection2016} with a single additional linear term.
    \item We propose aggregating expert views via the Poisson Binomial distribution for marginal probability estimates, the pairwise win rate for pairwise comparisons, and the normalized mean rank for ordinal rankings, each providing a direct and principled mapping from individual expert inputs to a single consensus prior per feature.
    \item We characterize the theoretical properties of the resulting objective, establishing monotone behavior in the expert prior, boundary behavior, and the reduction to standard best subsets under uninformative priors.
    \item We design a synthetic and real-data empirical validation strategy, with results forthcoming.
\end{enumerate}

The structure of the paper is as follows. Section~\ref{sec:method} presents the MIP formulation of best subsets, develops the EBBS model, and derives the MAP optimization problem. Section~\ref{sec:properties} establishes methodological properties of the formulation. Section~\ref{sec:empirical} outlines the empirical validation strategy, and Section~\ref{sec:conclusion} includes our concluding remarks.

\section{Expert-Implied Bayesian Best Subsets}
\label{sec:method}

\subsection{Best Subset Selection via MIO}
\label{sec:best_subsets}

We start with the MIP formulation of \textcite{bertsimasBestSubsetSelection2016}, which forms the foundation of our proposed method. Let $\bm{z} \in \{0,1\}^p$ be a binary vector indicating feature inclusion, $k$ be the maximum number of active features, and $\mathcal{M}_U$ be a constant exceeding the magnitude of any optimal coefficient. The best-subsets problem is
\begin{equation}
\label{eq:best_subsets_opt}
\begin{aligned}
    \min_{\boldsymbol{\beta},\, \bm{z}}
    \quad & \frac{1}{2} \| \bm{y} - \bm{X} \boldsymbol{\beta} \|_2^2 \\
    \textrm{s.t.}
    \quad & -\mathcal{M}_U z_i \leq \beta_i \leq \mathcal{M}_U z_i, \qquad i = 1, \ldots, p, \\
          & z_i \in \{0, 1\}, \qquad i = 1, \ldots, p, \\
          & \sum_{i=1}^p z_i \leq k.
\end{aligned}
\end{equation}
The constraint $-\mathcal{M}_U z_i \leq \beta_i \leq \mathcal{M}_U z_i$ forces $\beta_i = 0$ when $z_i = 0$ while leaving~$\beta_i$ unconstrained when $z_i = 1$. Our method preserves the constraint structure of Eq.~\eqref{eq:best_subsets_opt} and modifies only the objective.

\subsection{Expert View Specification}
\label{sec:expert_views}

There are multiple possible formats for eliciting domain-expert views. All three formulations described below ultimately produce the aggregated inclusion probability vector $\bar{\boldsymbol{e}}$ required by the MAP objective in Eq.~\eqref{eq:novel_opt}. They differ only in how expert knowledge is collected.

\subsubsection{Marginal Probability Elicitation}
\label{sec:marginal_elicitation}

This formulation requires each expert to provide a probability estimate of whether each feature belongs to the true model. An estimate of $0$ encodes near-certainty that the feature is absent from the true model. An estimate of $1$ encodes near-certainty that it is present. An estimate of $0.5$ corresponds to the expert holding no view on that feature. These probabilities are collected into the expert view
matrix $\bm{e} \in [0,1]^{N_{\text{experts}} \times p}$, where each column corresponds to a feature and each row corresponds to an expert.

\subsubsection{Pairwise Comparison Elicitation}
\label{sec:pairwise_elicitation}

An alternative formulation that may be more natural for domain experts requires each expert to make binary judgments over pairs of features, indicating which of two features is more important for predicting the response. Let $c_{i,ab} \in \{0,1\}$ denote the judgment of expert $i$ that feature $a$ is more important than feature $b$, with~$c_{i,ab} + c_{i,ba} = 1$ for all pairs $(a,b)$ with $a \neq b$. These judgments are collected across all~$N_{\text{e}}$ experts and up to $\binom{p}{2}$ feature pairs into a judgment array $\bm{C}$. Experts may leave pairs blank if they have no view on the relative importance of those two features or believe the two have equal importance. These blank pairs are omitted from the aggregation and contribute no information. This approach does not require transitivity of judgments.

\subsubsection{Ordinal Ranking Elicitation}
\label{sec:ordinal_elicitation}

A third formulation requires each expert to provide a complete or partial ranking of the $p$ features from most to least important. Let $\pi_i$ denote the ranking provided by expert $i$, where $\pi_i(r)$ is the feature assigned rank $r$. These rankings are collected across all $N_{\text{e}}$ experts. This formulation imposes a lower elicitation burden than the pairwise formulation when the feature set is large. A ranking of $p$ features requires only $p - 1$ decisions rather than up to $\binom{p}{2}$ pairwise comparisons. Experts who can only rank a subset of features can be accommodated by treating their response as a partial ranking over the features they have assessed. Unlike the pairwise comparisons, this approach requires a partial linear ordering of features.

\subsection{Expert View Aggregation}
\label{sec:aggregation}

Each elicitation format described in Section~\ref{sec:expert_views} requires a different aggregation method to produce the per-feature inclusion probability vector $\bar{\boldsymbol{e}} \in (0,1)^p$. In all cases, $\bar{e}_a$ encodes the degree of expert consensus that feature $a$ belongs in the true model, and is used identically as the parameter of the Bernoulli prior on $z_a$ in Section~\ref{sec:map}.

\subsubsection{Aggregation of Marginal Probabilities via the Poisson Binomial Distribution}
\label{sec:agg_marginal}

When experts provide marginal probability estimates, the estimates from different experts are treated as independent Bernoulli trials with different success probabilities. The distribution of the total number of successes follows the Poisson Binomial distribution \parencite{wangNumberSuccessesIndependent1993}. Let $p_i$ denote the probability of success of the $i$th trial, and let $F_\ell$ denote the set of all subsets of $\ell$ integers that can be selected from $\{1, 2, \ldots, N_{\text{experts}}\}$. The cumulative distribution function of the Poisson Binomial distribution is
\[
    \mathbb{P}(X \leq k) = \sum_{\ell=0}^{k} \sum_{A \in F_\ell} \left( \prod_{i \in A} p_i \prod_{j \in A^c} (1 - p_j) \right).
\]
The individual probabilities in $\bm{e}$ are aggregated into $\bar{\boldsymbol{e}}$, where $\bar{e}_a$ is defined as the probability that a majority of experts favor the inclusion of feature $a$, with ties resolved by uniform tie-breaking. Let $S_a$ denote the realized number of experts that vote for feature $a$. The aggregate expert probability for feature $a$ is
\begin{equation}
\label{eq:e_bar}
    \bar{e}_a = \Pr\!\left( S_a > \tfrac{N_{\text{experts}}}{2} \right) + \tfrac{1}{2}\, \Pr\!\left( S_a = \tfrac{N_{\text{experts}}}{2} \right),
\end{equation}
where each term is obtained from the Poisson Binomial probability mass function
\[
    \Pr(S_a = \ell) = \sum_{B \in F_{\ell}} \prod_{i \in B} e_{i,a} \prod_{j \in B^c} (1 - e_{j,a}).
\]
When $N_{\text{e}}$ is odd, $N_{\text{e}}/2$ is not an integer so $\Pr(S_a = N_{\text{e}}/2) = 0$ and Eq.~\eqref{eq:e_bar} reduces to the strict-majority probability $\Pr(S_a > \lfloor N_{\text{experts}}/2 \rfloor)$. When $N_{\text{e}}$ is even, the tie event $S_a = N_{\text{e}}/2$ carries positive mass and is split evenly between inclusion and exclusion. The coefficient $1/2$ is the deterministic version of a fair tie-breaking rule. Without it, a strict-majority aggregator applied to uninformed reports $e_{i,a} = 0.5$ would yield $\bar{e}_a < 0.5$ for even $N_{\text{e}}$, introducing a spurious penalty against inclusion.

\subsubsection{Aggregation of Pairwise Comparisons via the Win Rate}
\label{sec:agg_pairwise}

When experts provide pairwise judgments, each comparison $c_{i,ab} = 1$ constitutes a vote that feature $a$ is more important than feature $b$. Let $n_{ab} = \sum_{i=1}^{N_{\text{experts}}} c_{i,ab}$ denote the total number of experts who judged feature $a$ more important than feature $b$, and let $\mathcal{C}_a = \{b : n_{ab} + n_{ba} > 0\}$ denote the set of features against which feature $a$ was compared by at least one expert. The win rate of feature $a$ is defined as the proportion of comparisons involving feature $a$ that feature $a$ won:
\begin{equation}
\label{eq:e_bar_pw}
    \bar{e}_a = \frac{\displaystyle\sum_{b \in \mathcal{C}_a} n_{ab}}{\displaystyle\sum_{b \in \mathcal{C}_a} (n_{ab} + n_{ba})}.
\end{equation}
Since $\bar{e}_a \in [0,1]$ by construction, it serves directly as the inclusion probability for feature $a$. If feature $a$ wins all of its comparisons, then $\bar{e}_a = 1$; if feature $a$ loses all comparisons, then $\bar{e}_a = 0$; and if feature $a$ has equal wins and losses, then $\bar{e}_a = 0.5$, which contributes no net incentive to the objective. The win rate handles non-transitive judgments across experts by averaging over all observed comparisons without requiring consistency. If feature $a$ is not involved in any comparison, $\bar{e}_a$ is set to $0.5$ by convention.

\subsubsection{Aggregation of Ordinal Rankings via the Normalized Mean Rank}
\label{sec:agg_ordinal}

When experts provide ordinal rankings, each ranking $\pi_i$ assigns a position from $1$ (most important) to $p$ (least important) to each feature. Let $\pi_i(a)$ denote the rank assigned to feature $a$ by expert $i$, and let $\mathcal{R}_a = \{i : \text{expert } i \text{ ranked feature } a\}$ denote the set of experts who ranked feature $a$. The mean rank of feature $a$ across all experts who ranked it is
\[
    \bar{\pi}_a = \frac{1}{|\mathcal{R}_a|} \sum_{i \in \mathcal{R}_a} \pi_i(a).
\]
This is then normalized to $[0,1]$ by mapping rank $1$ to~$\bar{e}_a = 1$ and rank $p$ to $\bar{e}_a = 0$, which gives us
\begin{equation}
\label{eq:e_bar_rank}
    \bar{e}_a = 1 - \frac{\bar{\pi}_a - 1}{p - 1}.
\end{equation}
A feature ranked first by all experts receives $\bar{e}_a = 1$, a feature ranked last by all experts receives $\bar{e}_a = 0$, and a feature with an average rank of $(p+1)/2$ receives $\bar{e}_a = 0.5$, contributing no net incentive to the objective. Features not ranked by any expert have $\bar{e}_a$ set to $0.5$ by convention.

Rankings need not be complete. Eq.~\eqref{eq:e_bar_rank} divides by $p - 1$ under the assumption that each expert's ranks span $[1, p]$. Under partial rankings, an expert who ranks~$m_i < p$ features has ranks spanning only $[1, m_i]$, leaving the uniform rescaling vulnerable to single-expert domination on thinly-covered features. We accommodate partial rankings via a coverage-weighted generalization of Eq.~\eqref{eq:e_bar_rank} that rescales each expert's ranks on their own range and shrinks the aggregated score toward $0.5$ in proportion to the missing-coverage fraction. The generalization reduces exactly to Eq.~\eqref{eq:e_bar_rank} under full rankings. Appendix~\ref{app:partial_rankings} gives the full derivation.

In all three formulations, the resulting vector $\bar{\boldsymbol{e}}$ serves as the parameter vector of the prior on the binary inclusion indicators $\bm{z}$. The MAP formulation in Section~\ref{sec:map} proceeds identically regardless of which elicitation format was used.

\subsection{Maximum A Posteriori Formulation}
\label{sec:map}

We apply the following prior and likelihood structure to the model parameters. Let~$\sigma_\varepsilon^2$ denote the noise variance, that is, the conditional variance of $y_i$ given $\bm{x}_i$. We specify
\begin{align*}
    y_i &\sim \mathcal{N}(\bm{x}_i^\top \boldsymbol{\beta},\; \sigma_\varepsilon^2), \\
    p(\boldsymbol{\beta}) &\propto 1, \\
    z_i &\sim \operatorname{Bernoulli}(\bar{e}_i), \qquad i = 1, \ldots, p,
\end{align*}
where all elements of $\bm{z}$ are independent, each with its own parameter $\bar{e}_i$. We choose a flat improper prior for $\boldsymbol{\beta}$ so that the likelihood contribution of $\boldsymbol{\beta}$ reduces to the standard normal log-likelihood. This makes the formulation as similar as possible to ordinary least-squares (OLS) estimation. Other choices of prior on $\boldsymbol{\beta}$, such as a Gaussian, would lead to additional regularization terms such as the $L_2$ penalty.

We begin with the joint posterior
\begin{align*}
    P(\boldsymbol{\beta}, \bm{z} \mid \bm{X}, \bm{y}) &= \frac{P(\bm{y} \mid \bm{X}, \boldsymbol{\beta})\, P(\boldsymbol{\beta})\, P(\bm{z})}{P(\bm{X}, \bm{y})}.
\end{align*}
Taking logarithms of the joint posterior, we have
\begin{align*}
    \log P(\boldsymbol{\beta}, \bm{z} \mid \bm{X}, \bm{y}) &= \log P(\bm{y} \mid \bm{X}, \boldsymbol{\beta}) + \log P(\boldsymbol{\beta}) + \log P(\bm{z}) - \log P(\bm{X}, \bm{y}).
\end{align*}
Since $\log P(\bm{X}, \bm{y})$ does not depend on $\boldsymbol{\beta}$ or $\bm{z}$, and the flat prior on $\boldsymbol{\beta}$ is a constant, both terms can be absorbed into a single constant $c$, yielding
\begin{align*}
    \log P(\boldsymbol{\beta}, \bm{z} \mid \bm{X}, \bm{y}) &= \log P(\bm{y} \mid \bm{X}, \boldsymbol{\beta}) + \log P(\bm{z}) + c.
\end{align*}
Expanding the Gaussian likelihood, we have
\begin{align*}
    P(\bm{y} \mid \bm{X}, \boldsymbol{\beta})
    &= \prod_{i=1}^n P(y_i \mid \bm{x}_i, \boldsymbol{\beta}) \\
    &= \prod_{i=1}^n \frac{1}{\sqrt{2\pi\sigma_\varepsilon^2}} \exp\!\left(-\frac{(y_i - \bm{x}_i^\top \boldsymbol{\beta})^2}{2\sigma_\varepsilon^2}\right).
\end{align*}
Taking the logarithm yields
\begin{align*}
    \log P(\bm{y} \mid \bm{X}, \boldsymbol{\beta})
    &= \sum_{i=1}^n \log\left( \frac{1}{\sqrt{2\pi\sigma_\varepsilon^2}} \exp\!\left(-\frac{(y_i - \bm{x}_i^\top \boldsymbol{\beta})^2}{2\sigma_\varepsilon^2}\right) \right) \\
    &= \sum_{i=1}^n \left( \log (2\pi\sigma_\varepsilon^2)^{-1/2} + \log \exp\!\left(-\frac{(y_i - \bm{x}_i^\top \boldsymbol{\beta})^2}{2\sigma_\varepsilon^2}\right) \right) \\
    &= \sum_{i=1}^n \left( -\tfrac{1}{2}\log(2\pi\sigma_\varepsilon^2) - \frac{(y_i - \bm{x}_i^\top \boldsymbol{\beta})^2}{2\sigma_\varepsilon^2} \right) \\
    &= -\frac{n}{2}\log(2\pi\sigma_\varepsilon^2) - \frac{1}{2\sigma_\varepsilon^2} \|\bm{y} - \bm{X}\boldsymbol{\beta}\|_2^2.
\end{align*}
Since $-\frac{n}{2}\log(2\pi\sigma_\varepsilon^2)$ depends on neither $\boldsymbol{\beta}$ nor $\bm{z}$, it is absorbed into $c$. Expanding the Bernoulli prior, we have
\begin{align*}
    \log P(\boldsymbol{\beta}, \bm{z} \mid \bm{X}, \bm{y})
    &= -\frac{1}{2\sigma_\varepsilon^2} \|\bm{y} - \bm{X}\boldsymbol{\beta}\|_2^2 + \log P(\bm{z}) + c\\
    &= -\frac{1}{2\sigma_\varepsilon^2} \|\bm{y} - \bm{X}\boldsymbol{\beta}\|_2^2 + \left( \log \prod_{i=1}^p \bar{e}_i^{z_i} (1-\bar{e}_i)^{1-z_i} \right) + c\\
    &= -\frac{1}{2\sigma_\varepsilon^2} \|\bm{y} - \bm{X}\boldsymbol{\beta}\|_2^2 + \left( \sum_{i=1}^p z_i \log \bar{e}_i + (1-z_i)\log(1-\bar{e}_i) \right) + c\\
    &= -\frac{1}{2\sigma_\varepsilon^2} \|\bm{y} - \bm{X}\boldsymbol{\beta}\|_2^2 + \left( \sum_{i=1}^p z_i \log \frac{\bar{e}_i}{1-\bar{e}_i} + \sum_{i=1}^p \log(1-\bar{e}_i) \right) + c.
\end{align*}
Because $\sum_{i=1}^p \log(1-\bar{e}_i)$ depends on neither $\boldsymbol{\beta}$ nor $\bm{z}$, it can be absorbed into the constant $c$, yielding
\begin{equation}
\label{eq:log_posterior}
    \log P(\boldsymbol{\beta}, \bm{z} \mid \bm{X}, \bm{y}) = -\frac{1}{2\sigma_\varepsilon^2} \|\bm{y} - \bm{X}\boldsymbol{\beta}\|_2^2 + \bm{z}^\top \log \frac{\bm{\bar{e}}}{\mathbf{1} - \bm{\bar{e}}} + c,
\end{equation}
where the division and logarithm are applied component-wise. Here $\sigma_\varepsilon^2$ is treated as a fixed constant estimated from the training data; how it is estimated is described in Section~\ref{sec:implementation}. To find the MAP estimator, we maximize Eq.~\eqref{eq:log_posterior}, or equivalently minimize its negation, which yields
\begin{equation}
\label{eq:opt_no_constraints}
    \boldsymbol{\beta}_{\text{MAP}},\, \bm{z}_{\text{MAP}} = \argmin_{\boldsymbol{\beta},\, \bm{z}} \left( \frac{1}{2\sigma_\varepsilon^2} \|\bm{y} - \bm{X}\boldsymbol{\beta}\|_2^2 - \bm{z}^\top \log \frac{\bm{\bar{e}}}{\mathbf{1} - \bm{\bar{e}}} \right).
\end{equation}

\paragraph{Boundary conditions.}
When $\bar{e}_a \to 0$ or $\bar{e}_a \to 1$, the objective function approaches $-\infty$ or $\infty$ respectively. While this theoretically yields the appropriate behavior, the infinities cause numerical issues in practice. To avoid this, boundary expert views $\bar{e}_a \in \{0, 1\}$ become hard MIP equality constraints. These constraints are: For all $i \in \{\,i : \bar{e}_i \in \{0, 1 \} \,\}$, we have $z_i = \bar{e}_i$.

\subsection{Final Optimization Problem}
\label{sec:opt_problem}

Using the constraint structure of Eq.~\eqref{eq:best_subsets_opt}, the EBBS optimization problem is
\begin{equation}
\label{eq:novel_opt}
\begin{aligned}
    \min_{\boldsymbol{\beta},\, \bm{z}}
    \quad & \frac{1}{2\sigma_\varepsilon^2} \|\bm{y} - \bm{X}\boldsymbol{\beta}\|_2^2 - \bm{z}^\top \log \frac{\bm{\bar{e}}}{\mathbf{1} - \bm{\bar{e}}} \\
    \textrm{s.t.}
    \quad & -\mathcal{M}_U z_i \leq \beta_i \leq \mathcal{M}_U z_i, \qquad i = 1, \ldots, p, \\
    & z_i \in \{0, 1\}, \qquad i = 1, \ldots, p, \\
    & \sum_{i=1}^p z_i \leq k, \\
    & z_i = \bar{e}_i, \qquad i \in \{\,i : \bar{e}_i \in \{0, 1 \} \,\}.
\end{aligned}
\end{equation}
This is a MIO problem and inherits the structure of the best-subsets MIO of \textcite{bertsimasBestSubsetSelection2016}, with the addition of a linear term $-\bm{z}^\top \log\!\left(\bm{\bar{e}} / (\mathbf{1} - \bm{\bar{e}})\right)$ in the objective. This term is computed from the expert views prior to solving and enters the objective as a fixed vector of coefficients on the binary selection variables. The formulation also includes the constraint $z_i = \bar{e}_i$ when $\bar{e}_i \in \{ 0, 1 \}$ to resolve numerical issues from boundary conditions. The MIP structure and constraint set are otherwise unchanged, so standard warm-starting and MIO algorithmic advances \parencite{bertsimasBestSubsetSelection2016, hazimehFastBestSubset2020} apply directly to EBBS. \textcite{bertsimasBestSubsetSelection2016} introduced alternative formulations that may perform better under certain regimes, such as the Specially Ordered Sets (SOS-1) formulation. These formulations can be extended similarly to \ref{eq:novel_opt} by adding $-\bm{z}^\top \log\!\left(\bm{\bar{e}} / (\mathbf{1} - \bm{\bar{e}})\right)$ to the respective objective functions. As in Eq.~\eqref{eq:best_subsets_opt}, the only tuning parameter requiring cross-validation is $k$.

\subsection{Implementation Details}
\label{sec:implementation}

\paragraph{Noise variance estimation.}
The MAP objective in Eq.~\eqref{eq:opt_no_constraints} requires the noise variance $\sigma_\varepsilon^2$ as a fixed input. Following \textcite{bertsimasBestSubsetSelection2016}, we standardize the columns of $\bm{X}$ to zero mean and unit $\ell_2$-norm. Unlike Bertsimas et al., who additionally normalize $\bm{y}$ to unit $\ell_2$-norm, we center $\bm{y}$ only. This keeps $\hat{\sigma}_\varepsilon^2$ interpretable as the actual noise variance in $\bm{y}$'s natural scale rather than a rescaled quantity that conflates noise variance with sample size. We estimate $\sigma_\varepsilon^2$ from the training data using OLS residuals as
\[
    \hat{\sigma}_\varepsilon^2 = \frac{1}{n-p-1} \|\bm{y} - \bm{X}\hat{\boldsymbol{\beta}}_{\text{OLS}}\|_2^2,
\]
where the $-1$ in $n-p-1$ accounts for the degree of freedom used by centering $\bm{y}$.

We treat $\hat{\sigma}_\varepsilon^2$ as a known constant, an empirical-Bayes plug-in, rather than placing an inverse-gamma prior and marginalizing it out. Under the conjugate inverse-gamma prior, marginalizing $\sigma_\varepsilon^2$ yields a multivariate Student-$t$ marginal likelihood in $\bm{y}$ \parencite{gelmanBayesianDataAnalysis2013}. The log of this term is not quadratic in $\boldsymbol{\beta}$. The Gaussian data term, by contrast, is $-\frac{1}{2\sigma_\varepsilon^2}\|\bm{y} - \bm{X}\boldsymbol{\beta}\|_2^2$, which is quadratic. For fixed $\bm{z}$, this quadraticity makes the inner minimization over $\boldsymbol{\beta}$ a convex quadratic program that MIO solvers handle through continuous relaxations. The non-quadratic Student-$t$ log term does not admit this structure, so a fully Bayesian treatment of $\sigma_\varepsilon^2$ is not compatible with the MIO formulation.

\paragraph{High-dimensional regime.}
When $n \leq p + 1$, the divisor $n - p - 1$ becomes non-positive and $\hat{\sigma}_\varepsilon^2$ is not well-defined. We fall back to an estimator that substitutes a sparse support for the full feature set. We fit a cross-validation-tuned Lasso on the training data, let $S$ denote its selected support, refit OLS on $\bm{X}_S$, and compute
\[
    \hat{\sigma}_\varepsilon^2 = \frac{1}{n - |S| - 1} \|\bm{y} - \bm{X}_S \hat{\boldsymbol{\beta}}_S\|_2^2.
\]
This preserves the denominator form of the original estimator, with $|S|$ in place of $p$, and is well-defined whenever $|S| + 1 < n$. This holds on the high-dimensional benchmarks considered in \textcite{bertsimasBestSubsetSelection2016}, such as leukemia ($n=72, p=1000$). In the rare case when $|S| + 1 \geq n$, or when the Lasso selects $|S| = 0$, we fall back further to the sample marginal variance $\operatorname{Var}(\bm{y})$. This estimates the marginal variance $\sigma_y^2$ rather than the noise variance $\sigma_\varepsilon^2$. The decomposition $\sigma_y^2 = \sigma_\varepsilon^2 + \operatorname{Var}(\bm{x}^\top \boldsymbol{\beta}^*)$ shows that $\sigma_y^2$ contains signal variance as well, so this fallback overestimates $\sigma_\varepsilon^2$. The overestimate shrinks $1/(2\hat{\sigma}_\varepsilon^2)$ and shifts relative influence toward the expert prior. The Lasso is used here only to estimate $\hat{\sigma}_\varepsilon^2$, not to perform feature selection.

\paragraph{Prior influence and sample size.}
Under any Gaussian likelihood, the residual sum of squares $\|\bm{y} - \bm{X}\boldsymbol{\beta}\|_2^2$ grows linearly with $n$, so the data term in Eq.~\eqref{eq:opt_no_constraints} contributes $O(n)$ information regardless of how $\bm{y}$ is normalized. The expert log-odds prior term is independent of $n$. As $n$ grows, the expert prior becomes progressively less influential on features where $\bar{e}_a \in (0, 1)$.

\section{Methodological Properties}
\label{sec:properties}

We now characterize the behavior of the expert prior term in the EBBS objective. The per-feature contribution of the prior to the objective is $-z_a \log\!\left(\bar{e}_a / (1 - \bar{e}_a)\right)$. Since $z_a \in \{0,1\}$, this term equals $-\log\!\left(\bar{e}_a / (1 - \bar{e}_a)\right)$ when $z_a = 1$ and $0$ when $z_a = 0$.

There are three cases depending on the value of $\bar{e}_a$:

\begin{proposition}
\label{prop:prior_behavior}
The expert prior term in Eq.~\eqref{eq:novel_opt} satisfies the following behavioral properties for each feature $a \in \{1, \ldots, p\}$:
\begin{enumerate}
    \item \emph{Expert support ($\bar{e}_a > 0.5$):} The log-odds term is positive, so $-z_a \log\!\left(\bar{e}_a / (1-\bar{e}_a)\right) < 0$ when $z_a = 1$, reducing the objective and incentivizing inclusion of feature $a$.
    \item \emph{Expert opposition ($\bar{e}_a < 0.5$):} The log-odds term is negative, so $-z_a \log\!\left(\bar{e}_a / (1-\bar{e}_a)\right) > 0$ when $z_a = 1$, increasing the objective and penalizing inclusion of feature $a$.
    \item \emph{Expert indifference ($\bar{e}_a = 0.5$):} The log-odds term is $0$, so the prior has no effect on the optimization for feature $a$.
\end{enumerate}
\end{proposition}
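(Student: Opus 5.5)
The plan is to reduce all three cases to the sign of the log-odds function $\ell(x) = \log\bigl(x/(1-x)\bigr)$ on the open interval $(0,1)$. We restrict to $\bar{e}_a \in (0,1)$: the boundary values $\bar{e}_a \in \{0,1\}$ are removed from the objective and imposed as the hard constraints $z_a = \bar{e}_a$ in Eq.~\eqref{eq:novel_opt}, so the log-odds term is finite exactly where it appears.

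The first step is to note that $x/(1-x)$ is strictly increasing on $(0,1)$. Its derivative is $1/(1-x)^2 > 0$, and it equals $1$ exactly at $x = 1/2$. Hence $x/(1-x) > 1$ if and only if $x > 1/2$, and $x/(1-x) < 1$ if and only if $x < 1/2$. Composing with the strictly increasing logarithm, which vanishes at $1$, gives $\ell(\bar{e}_a) > 0$, $\ell(\bar{e}_a) < 0$, or $\ell(\bar{e}_a) = 0$ according as $\bar{e}_a > 0.5$, $\bar{e}_a < 0.5$, or $\bar{e}_a = 0.5$. One can also argue directly from the inequality $\bar{e}_a > 1 - \bar{e}_a \iff \bar{e}_a > 1/2$, which avoids derivatives.

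The second step translates these signs into statements about the objective. Because $z_a \in \{0,1\}$, the per-feature contribution $-z_a\,\ell(\bar{e}_a)$ equals $-\ell(\bar{e}_a)$ when $z_a = 1$ and $0$ when $z_a = 0$, as noted just before the proposition. Fix every other decision variable: the remaining $z_b$ and all of $\boldsymbol{\beta}$ subject to the big-$\mathcal{M}_U$ coupling. Switching $z_a$ from $0$ to $1$ then changes the prior part of the objective by exactly $-\ell(\bar{e}_a)$. In case 1 this change is negative, so inclusion is rewarded. In case 2 it is positive, so inclusion is penalized. In case 3 it is zero. Since the term is separable across features and linear in $\bm{z}$, no interaction with other features needs to be considered.

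The only step needing care is making precise what ``no effect on the optimization'' means in case 3. I would show that the objective of Eq.~\eqref{eq:novel_opt} coincides, as a function of $(\boldsymbol{\beta}, \bm{z})$, with the objective obtained by deleting feature $a$'s prior term. Both problems therefore have the same feasible set and the same minimizers. When this holds for all features, the problem is the best-subsets problem~\eqref{eq:best_subsets_opt} with its objective scaled by the positive constant $1/\sigma_\varepsilon^2$, which leaves the argmin unchanged.

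Beyond that, the argument is elementary. The main point is to be explicit that ``incentivizing'' and ``penalizing'' refer to this ceteris paribus change in the objective value, not to a guarantee that feature $a$ is or is not selected at the optimum. Whether it is selected still depends on the data term and on the cardinality constraint.
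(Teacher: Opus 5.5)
Your proposal is correct and takes the same route as the paper. The paper gives no separate proof: it relies on the sign of $\log\bigl(\bar{e}_a/(1-\bar{e}_a)\bigr)$ relative to $\bar{e}_a = 0.5$, together with the observation that, since $z_a \in \{0,1\}$, the per-feature contribution is either $-\log\bigl(\bar{e}_a/(1-\bar{e}_a)\bigr)$ or $0$; your monotonicity step, the exclusion of boundary values handled by the constraints $z_a = \bar{e}_a$, and the ceteris paribus reading of ``incentivizing'' make explicit what the paper leaves implicit.
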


\begin{corollary}
\label{cor:reduction}
If $\bar{e}_a = 0.5$ for all $a \in \{1, \ldots, p\}$, the EBBS objective in Eq.~\eqref{eq:novel_opt} reduces to the best-subsets objective in Eq.~\eqref{eq:best_subsets_opt}, as the constant factor $1/\sigma_\varepsilon^{2}$ has no effect on the optimization when the linear expert term reduces to $0$ and the $z_i = \bar{e}_i$ constraints become inactive. This occurs when all expert estimates satisfy $e_{i,a} = 0.5$ for all $i$ and $a$, corresponding to experts holding no views on any feature.
\end{corollary}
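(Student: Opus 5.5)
The statement to prove is Corollary~\ref{cor:reduction}. It follows from case~3 of Proposition~\ref{prop:prior_behavior} together with a check on the aggregation rule. There are two parts: the reduction of the objective, and the sufficient condition on the raw expert estimates.

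\textbf{Reduction of the objective.} If $\bar{e}_a = 0.5$ for every $a$, then $\log(\bar{e}_a/(1-\bar{e}_a)) = \log 1 = 0$ componentwise. The linear term $-\bm{z}^\top \log(\bar{\bm{e}}/(\mathbf{1}-\bar{\bm{e}}))$ therefore vanishes identically in $\bm{z}$. Since $0.5 \notin \{0,1\}$, the index set $\{i : \bar{e}_i \in \{0,1\}\}$ is empty, so the boundary equality constraints disappear. The feasible set of Eq.~\eqref{eq:novel_opt} then coincides exactly with that of Eq.~\eqref{eq:best_subsets_opt}.

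The remaining objective is $\frac{1}{2\sigma_\varepsilon^2}\|\bm{y}-\bm{X}\boldsymbol{\beta}\|_2^2$, which equals $\sigma_\varepsilon^{-2}$ times the best-subsets objective. Because $\sigma_\varepsilon^2>0$ is a fixed constant (the plug-in $\hat\sigma_\varepsilon^2$ of Section~\ref{sec:implementation}), multiplying by this positive scalar preserves the argmin set over a common feasible set. Hence the two problems have identical optimal $(\boldsymbol{\beta},\bm{z})$, and their optimal values differ only by that factor.

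\textbf{Sufficient condition.} Here I must show that $e_{i,a}=0.5$ for all $i,a$ gives $\bar{e}_a=0.5$ under the marginal aggregator Eq.~\eqref{eq:e_bar}. With all success probabilities equal to $1/2$, $S_a \sim \mathrm{Binomial}(N_{\text{e}},1/2)$, which is symmetric about $N_{\text{e}}/2$. Symmetry gives $\Pr(S_a > N_{\text{e}}/2) = \Pr(S_a < N_{\text{e}}/2)$.

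Write $q = \Pr(S_a > N_{\text{e}}/2)$ and $t = \Pr(S_a = N_{\text{e}}/2)$, so that $2q + t = 1$. Then $\bar{e}_a = q + t/2 = 1/2$. This covers both parities: for odd $N_{\text{e}}$ we have $t=0$, and for even $N_{\text{e}}$ the tie is split by the $1/2$ coefficient. This step is the one place requiring a real argument, and the only subtlety is invoking the symmetry $\Pr(S_a=\ell)=\Pr(S_a=N_{\text{e}}-\ell)$, which is immediate from $\binom{N}{\ell}=\binom{N}{N-\ell}$.

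I would add a remark that the analogous no-view conventions also give $\bar{e}_a=0.5$ for the other formats. Under pairwise comparison this covers blank pairs, and under ordinal ranking it covers unranked features. So the corollary holds regardless of elicitation format, although the statement only requires the marginal case.
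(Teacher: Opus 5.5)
Your proposal is correct and matches the justification the paper gives inline in the corollary. At $\bar{e}_a = 0.5$ the log-odds vanish, the boundary constraints $z_i = \bar{e}_i$ are vacuous, and rescaling by the positive constant $1/\sigma_\varepsilon^2$ leaves the argmin over the common feasible set unchanged. Your $\mathrm{Binomial}(N_{\text{e}},1/2)$ symmetry argument makes explicit the step the paper only asserts through its tie-breaking discussion in Section~\ref{sec:agg_marginal}, and you correctly read ``this occurs when'' as a sufficient, not necessary, condition.
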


\begin{proposition}
\label{prop:boundary_behavior}
EBBS incorporates expert views as intended: near-certain expert belief forces the corresponding selection decision, while moderate beliefs provide a soft incentive that interacts with the data.
\end{proposition}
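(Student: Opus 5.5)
To make the statement precise, I will split it into two claims and prove each by comparing objective values of feasible solutions of Eq.~\eqref{eq:novel_opt} that differ only in $z_a$. Write $\lambda_a = \log(\bar{e}_a/(1-\bar{e}_a))$. For a fixed support $S$, let $R(S) = \min_{\boldsymbol{\beta}:\, \mathrm{supp}(\boldsymbol{\beta})\subseteq S,\ \|\boldsymbol{\beta}\|_\infty\le \mathcal{M}_U} \tfrac{1}{2\sigma_\varepsilon^2}\|\bm{y}-\bm{X}\boldsymbol{\beta}\|_2^2$. The EBBS problem is then equivalent to minimizing $R(S) - \sum_{a\in S}\lambda_a$ over feasible supports with $|S|\le k$.

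\textbf{Claim (i), near-certain beliefs.} If $\bar{e}_a\in\{0,1\}$, then $z_a = \bar{e}_a$ at every feasible point. This is immediate from the equality constraints in Eq.~\eqref{eq:novel_opt}. I will also show that this hard constraint is the limit of the soft formulation, which justifies it as the intended behavior rather than only a numerical device.

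Fix all other $\bar{e}_b$ and let $\bar{e}_a\to 1$, so $\lambda_a\to+\infty$. Take any support $S$ with $a\notin S$. If $|S|<k$, compare it with $S\cup\{a\}$. If $|S|=k$, compare it with $S\cup\{a\}\setminus\{b\}$ for some $b\in S$ with $\bar{e}_b<1$. Since $R$ is bounded on the finite family of supports ($0\le R\le \|\bm{y}\|^2/(2\sigma_\varepsilon^2)$), the objective difference is at most a bounded quantity minus $\lambda_a$. This difference becomes negative once $\lambda_a$ exceeds an explicit threshold, for example
\[
\lambda_a > \frac{\|\bm{y}\|_2^2}{2\sigma_\varepsilon^2} + \max_b |\lambda_b| .
\]
Hence every minimizer eventually has $z_a=1$. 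The case $\bar{e}_a\to 0$ is symmetric: $\lambda_a\to-\infty$, and dropping $a$ from $S$ always remains feasible. The one case that needs care is when $k$ is already filled by other features with $\bar{e}_b=1$. There the equality constraints themselves may be infeasible, and I will note that feasibility requires $|\{b:\bar{e}_b=1\}|\le k$.

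\textbf{Claim (ii), moderate beliefs.} If $\bar{e}_a\in(0,1)$, then $\lambda_a$ is finite, and the inclusion decision is determined by the data-driven quantity
\[
\Delta_a(S) = R(S\setminus\{a\}) - R(S\cup\{a\}) \ge 0
\]
weighed against $\lambda_a$. I will show that adding $a$ to a support $S$ with spare cardinality lowers the objective if and only if $\Delta_a(S) > -\lambda_a$. It follows that opposition ($\lambda_a<0$) can be overridden by a sufficiently large reduction in residual sum of squares. In the other direction, support ($\lambda_a>0$) makes inclusion weakly preferred at spare cardinality, since $\Delta_a\ge 0$. Under a binding cardinality constraint, support is overridden whenever a swap partner $b$ satisfies $R(S\cup\{a\}\setminus\{b\}) - R(S) > \lambda_a - \lambda_b$.

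To show the incentive is genuinely soft, I will exhibit a simple example in each direction. For opposition, take an orthogonal design with $k=p$, where the decision decouples per feature: $a$ is included if and only if $\hat\beta_a^2\|\bm{x}_a\|^2/(2\sigma_\varepsilon^2) > -\lambda_a$. Choosing $\bm{y}$ appropriately makes either outcome occur. I will then combine this with the observation in the implementation details that $\Delta_a$ scales as $O(n)$, so moderate priors are eventually dominated by the data.

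\textbf{Main obstacle.} The hardest step is formalizing the informal phrase ``interacts with the data'' in claim (ii), particularly when the cardinality constraint binds, because $R$ is not modular and swaps couple features. I plan to handle this by stating the result for the spare-cardinality comparison and the orthogonal example rather than for general designs. I will also make sure the limit argument in claim (i) relies only on the boundedness of $R$ over the finitely many supports.
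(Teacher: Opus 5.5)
Your proposal is correct, and it proves more than the paper does. The paper's proof is only the limiting observation. As $\bar{e}_a \to 1$ (resp.\ $0$) the log-odds diverges to $+\infty$ (resp.\ $-\infty$), so the objective is driven to $-\infty$ (resp.\ $+\infty$) whenever $z_a = 1$, and this is taken to force $z_a^* = 1$ (resp.\ $0$). The ``soft incentive that interacts with the data'' half of the statement is not argued in the proof at all; it rests on the sign analysis of Proposition~\ref{prop:prior_behavior}.

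Your route differs in three ways:
\begin{itemize}
\item You observe that in Eq.~\eqref{eq:novel_opt} the boundary decision is enforced literally by the equality constraints.
\item You make the limiting heuristic quantitative. You pass to the support function $R(S)$, use $0 \le R \le \|\bm{y}\|_2^2/(2\sigma_\varepsilon^2)$, and run an add/swap exchange to get an explicit threshold on $\lambda_a$.
\item You formalize the moderate case by weighing $\Delta_a(S)$ against $-\lambda_a$.
\end{itemize}
This makes explicit what the paper's one-line argument silently needs:
\begin{itemize}
\item the data term is bounded over the finitely many supports, so a single diverging coefficient decides the minimizer;
\item a feasible point with $z_a = 1$ exists, which is your condition $|\{b : \bar{e}_b = 1\}| \le k$ and is never stated in the paper;
\item ``forcing'' already holds for $\bar{e}_a$ strictly inside $(0,1)$ beyond an explicit threshold, not only in the limit.
\end{itemize}

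There are three things to tighten.

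\textbf{Support is decisive at spare cardinality.} Your own computation gives $-\Delta_a(S) - \lambda_a \le -\lambda_a < 0$ when there is room, so support is \emph{strictly} decisive there, not just weakly preferred. With $k = p$, every feature with $\bar{e}_a > 0.5$ is selected regardless of the data. Your orthogonal $k = p$ example therefore shows softness only in the opposition direction. For the support direction you need a binding-cardinality example. For instance, take two orthogonal features and $k = 1$: a neutral feature wins if and only if its reduction in $R$ exceeds that of the supported feature by more than $\lambda_a$.

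\textbf{The swap criterion must be global.} State it as: every feasible support containing $a$ is worse than the best feasible support excluding $a$. One unfavorable swap partner $b$ does not by itself exclude $a$, since another swap may still be favorable.

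\textbf{The $O(n)$ remark is only motivation.} It supports ``the data eventually dominates'' only for features with a nonzero true coefficient. For a null feature added to a support containing the true one, $\Delta_a(S)$ stays $O_p(1)$ (roughly $\chi^2_1/2$), so the prior never washes out there. Keep this remark out of the proof.
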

\begin{proof}
The expert prior term exhibits the following boundary behavior:
\begin{enumerate}
    \item As $\bar{e}_a \to 1$, the log-odds $\log(\bar{e}_a / (1-\bar{e}_a)) \to +\infty$, driving the objective to $-\infty$ when $z_a = 1$, which forces $z_a^* = 1$ in any optimal solution.
    \item As $\bar{e}_a \to 0$, the log-odds $\log(\bar{e}_a / (1-\bar{e}_a)) \to -\infty$, driving the objective to $+\infty$ when $z_a = 1$, which forces $z_a^* = 0$ in any optimal solution.
\end{enumerate}
\end{proof}

Propositions~\ref{prop:prior_behavior} and~\ref{prop:boundary_behavior} together suggest that the expert prior provides a smooth interpolation between pure data-driven selection (at $\bar{e}_a = 0.5$) and deterministic enforcement of expert judgment at the boundaries $\bar{e}_a \in \{0, 1\}$. The strength of the expert influence on each feature is determined by $\bar{e}_a$, which is itself determined by the elicited probabilities through Eq.~\eqref{eq:e_bar}.

\begin{remark}
The objective in Eq.~\eqref{eq:novel_opt} can be interpreted as a penalized regression, where the penalty on feature $a$ is $-\log\!\left(\bar{e}_a / (1-\bar{e}_a)\right)$. Unlike the $L_1$ or $L_2$ penalties used in regularization methods, this penalty is applied to the binary inclusion indicator $z_a$ rather than directly to the coefficient $\beta_a$. It can be positive (discouraging inclusion) or negative (encouraging inclusion) depending on the expert consensus.
\end{remark}

\section{Empirical Results}
\label{sec:empirical}

\emph{Empirical results are forthcoming.} We outline the experimental design below.

\subsection{Synthetic Data}
\label{sec:synthetic}

We plan to validate EBBS on synthetic datasets generated following the framework of \textcite{bertsimasBestSubsetSelection2016}. Data are generated from $\bm{y} = \bm{X}\boldsymbol{\beta}^* + \boldsymbol{\varepsilon}$, where $\boldsymbol{\varepsilon} \sim \mathcal{N}(0, \sigma_\varepsilon^2 I)$, $\bm{X} \sim \mathcal{N}(0, I)$, and $\boldsymbol{\beta}^*$ is sparse with only $k^*$ nonzero coefficients. Because the true model is known, we can evaluate both out-of-sample mean squared error and feature recovery accuracy, measured by true positive rate, false positive rate, and $F_1$ score against the true support of $\boldsymbol{\beta}^*$. We vary the signal-to-noise ratio (low, medium, high), sample size $n$ and dimensionality $p$ including $p > n$ regimes, true sparsity level $k^*$, and the correlation structure of $\bm{X}$.

Since the true $\boldsymbol{\beta}^*$ is known in the synthetic setting, expert views are constructed as a linear mixture between the true signal and complete lack of expert views. For each expert $i$ and feature $a$, we set
\begin{equation}
\label{eq:expert_mixture}
    e_{i,a} = (1 - \lambda) \cdot \mathbbm{1}(\beta_a^* \neq 0) + \lambda \cdot 0.5,
\end{equation}
where $\lambda \in [0, 1]$ is a noise parameter. This is a convex combination, so $e_{i,a} \in [0,1]$ by construction for all values of $\lambda$. At $\lambda = 0$, experts are perfectly informed with $e_{i,a} \in \{0,1\}$, which matches the true support. At $\lambda = 1$, experts are completely uninformed with $e_{i,a} = 0.5$ for all features, and EBBS reduces to best subsets. At intermediate values, true features receive $e_{i,a} = 1 - \lambda/2$ and irrelevant features receive $e_{i,a} = \lambda/2$, both shrinking linearly toward $0.5$ as $\lambda$ increases. We evaluate the model on the grid $\lambda \in \{0, \, 0.2, \, 0.4, \, 0.6, \, 0.8, \,1.0\}$.

\paragraph{Expert disagreement.}
The construction in Eq.~\eqref{eq:expert_mixture} produces identical probability reports across all experts for each feature: every expert $i$ reports $e_{i,a} = 1 - \lambda/2$ for every true feature $a$ and $e_{i,a} = \lambda/2$ for every irrelevant feature. Under this construction the Poisson Binomial aggregation in Eq.~\eqref{eq:e_bar} collapses to a Binomial with a shared success probability. The number of experts $N_{\text{e}}$ then affects only the sharpness of the majority-vote threshold. The aggregation methodology introduced in Section~\ref{sec:agg_marginal} is therefore nontrivially utilized only when experts genuinely disagree. To validate EBBS under heterogeneous expert views, we augment Eq.~\eqref{eq:expert_mixture} with three disagreement models, each governed by a single additional parameter. All three strictly generalize the $\lambda$-mixture construction: at the zero setting of their disagreement parameter, each reduces exactly to Eq.~\eqref{eq:expert_mixture}.

\paragraph{Beta-distributed probability reports.}
The first model draws each expert's reported probability from a continuous distribution centered on the $\lambda$-mixture mean. We select the Beta distribution for two reasons. Its support is the unit interval, which matches the type space of probability reports without requiring truncation or rescaling. Its two shape parameters allow independent control over the first two moments, so the mean can be anchored at the $\lambda$-mixture value while the spread is tuned separately. Let $\mu_a = (1 - \lambda)\, \mathbbm{1}(\beta_a^* \neq 0) + \lambda \cdot 0.5$ denote the mean report at feature $a$ under Eq.~\eqref{eq:expert_mixture}. For expert $i$ we draw
\begin{equation}
\label{eq:expert_beta}
    e_{i,a} \sim \operatorname{Beta}\!\left(\mu_a\, c_a,\; (1 - \mu_a)\, c_a\right), \qquad c_a = \frac{\mu_a (1 - \mu_a)}{\eta^2} - 1,
\end{equation}
independently across experts and features, where $\eta \geq 0$ is a disagreement parameter measured on the standard-deviation scale. By construction, the resulting Beta distribution satisfies
\[
    \mathbb{E}[e_{i,a}] = \mu_a, \qquad \var(e_{i,a}) = \eta^2,
\]
whenever $\mu_a \in (0, 1)$ and $\eta < \sqrt{\mu_a(1-\mu_a)}$. The constant $c_a$ is the unique positive value that pins the variance to $\eta^2$ while preserving the mean at $\mu_a$. See Appendix~\ref{app:beta_moments} for the derivation. At $\eta = 0$, or at the $\lambda = 0$ boundary where $\mu_a \in \{0, 1\}$, the construction collapses to $e_{i,a} = \mu_a$ deterministically, recovering Eq.~\eqref{eq:expert_mixture}. As $\eta$ increases, reports become progressively more dispersed around $\mu_a$. This model captures experts whose reports share the correct mean but are individually noisy around it.

\paragraph{Bit-flip to no view.}
The second model captures experts who occasionally do not know the importance of a feature, but recognize their own uncertainty: rather than reporting a misleading value, the expert reports the uninformative value $0.5$. With probability $1 - p_{\text{flip}}$, the expert reports truthfully via the $\lambda$-mixture of Eq.~\eqref{eq:expert_mixture}, and with probability $p_{\text{flip}} \in [0, 0.5]$ they provide no view independently across experts and features. That is
\begin{equation}
\label{eq:expert_bitflip_noview}
    e_{i,a} = \begin{cases}
        (1 - \lambda)\, \mathbbm{1}(\beta_a^* \neq 0) + \lambda \cdot 0.5, & \text{with probability } 1 - p_{\text{flip}}, \\
        0.5, & \text{with probability } p_{\text{flip}}.
    \end{cases}
\end{equation} At $p_{\text{flip}} = 0$ every report is truthful and the construction reduces to Eq.~\eqref{eq:expert_mixture}. For nonzero $p_{\text{flip}}$, a no-view report carries no directional information about feature $a$'s membership in the true support: it neither supports nor opposes the truth. At~$\lambda = 0.2$, for instance, a feature actually in the true support receives probability $0.9$ from a truthful expert and $0.5$ from an expert with no view. This captures domain experts who admit ignorance on features outside their subspecialty rather than offering incorrect judgments.

\paragraph{Bit-flip belief errors.}
The third model is the adversarial counterpart of the no-view model: rather than providing no view, an erring expert confidently reports the \emph{opposite} of the truth. With probability $1 - p_{\text{flip}}$ the expert reports truthfully, and with probability $p_{\text{flip}} \in [0, 0.5]$ they report the $\lambda$-mixture applied to the complement of the true indicator again independently across experts and features. That is
\begin{equation}
\label{eq:expert_bitflip}
    e_{i,a} = \begin{cases}
        (1 - \lambda)\, \mathbbm{1}(\beta_a^* \neq 0) + \lambda \cdot 0.5, & \text{with probability } 1 - p_{\text{flip}}, \\
        (1 - \lambda)\, [\, 1 - \mathbbm{1}(\beta_a^* \neq 0) \,] + \lambda \cdot 0.5, & \text{with probability } p_{\text{flip}}.
    \end{cases}
\end{equation}
At $p_{\text{flip}} = 0$ every report is truthful and the construction reduces to Eq.~\eqref{eq:expert_mixture}. For nonzero $p_{\text{flip}}$, a flipped expert's report is the mirror image of the truthful report about $0.5$, actively contradicting the truth rather than merely failing to contribute. Continuing the example at $\lambda = 0.2$, a feature actually in the true support receives probability $0.1$ from a flipped expert, which is the reflection of the truthful $0.9$ and below the $0.5$ produced by an expert that provides no view. This captures domain experts who confidently provide incorrect beliefs.

\paragraph{Disagreement grids.}
For the Beta model we evaluate $\eta \in \{0, 0.05, 0.1, 0.2\}$, and for both bit-flip variants we evaluate $p_{\text{flip}} \in \{0, 0.1, 0.2, 0.3\}$. Each disagreement grid is crossed with the $\lambda$ grid, and at each grid point we measure the same out-of-sample prediction and feature-recovery metrics as under Eq.~\eqref{eq:expert_mixture}. Because having more than one expert is only nontrivial when experts have different views, the disagreement sweep also allows us to characterize the benefit of pooling multiple experts as a function of expert error rate.

We compare EBBS against stepwise selection, Lasso, Ridge Regression, Elastic Net, best subsets, and other regularization-based feature selection methods. We anticipate that EBBS may outperform benchmarks on feature selection accuracy at $\lambda = 0$, match best subsets at $\lambda = 1$, and degrade smoothly as $\lambda$ increases from $0$ to $1$. For the disagreement models, we anticipate that performance degrades smoothly as $\eta$ or $p_{\text{flip}}$ grows. The bit-flip belief error model is expected to be the most adverse of the three, since flipped experts actively contradict the truth. We also anticipate that increasing $N_{\text{e}}$ improves feature-recovery accuracy whenever the disagreement parameter is strictly positive.

\subsection{Real Data}
\label{sec:real_data}

For real datasets, where the true model is unknown, we approximate the ground truth in two steps. First, we select the cardinality~$k_{\text{pseudo}}$ by running best subsets on the entire dataset across a grid $k \in \{1, 2, \ldots, k_{\max}\}$, and choosing the $k$ that minimizes the squared error. Second, we refit best subsets on the full dataset at $k_{\text{pseudo}}$ and treat the recovered support as the true support. Expert views are then constructed using the same $\lambda$-mixture and disagreement models as in Section~\ref{sec:synthetic}, with $\mathbbm{1}(\beta_a^* \neq 0)$ replaced by $\mathbbm{1}(a \in \text{best subsets support at } k_{\text{pseudo}})$ throughout. The same $\lambda$ and disagreement grids are applied.

Within each train/validation/test split, the EBBS cardinality $k$ is chosen by cross-validation over the same grid $\{1, 2, \ldots, k_{\max}\}$. Here $k_{\max}$ is large enough to cover $k_{\text{pseudo}}$, so that the grid can reach every feature in the pseudo-true support and feature-recovery metrics are not structurally capped. We select the same two real datasets used in \textcite{bertsimasBestSubsetSelection2016}: the diabetes dataset\footnote{\url{https://www4.stat.ncsu.edu/~boos/var.select/diabetes.html}} and the leukemia gene expression dataset\footnote{\url{https://hastie.su.domains/CASI_files/DATA/leukemia.html}}.

\section{Conclusion}
\label{sec:conclusion}

We have introduced Expert-Implied Bayesian Best Subsets (EBBS), a method that extends the MIP formulation of best subset selection to incorporate domain-expert views on feature relevance through a MAP Bayesian framework. Expert assessments from multiple respondents are aggregated into a single per-feature probability using the Poisson Binomial distribution for marginal probability estimates, the pairwise win rate for pairwise comparisons, or the normalized mean rank for ordinal rankings. This probability enters the objective as a log-odds incentive term that incorporates the expert consensus into the selection decision. Methodological properties suggest that the method behaves as expected across the range of expert informativeness levels. It reduces to standard best subsets when experts are uninformative and enforces the expert-driven decision on any feature for which experts express full confidence.

Unlike existing expert-informed feature selection methods, which all operate within the Lasso family \parencite{zengIncorporatingPriorKnowledge2021, wuDomainKnowledgeenhancedVariable2022, destouchesFeatureSelectionPrior2023, zhangLLMLassoRobustFramework2025}, EBBS retains the global optimality guarantees of best subsets. This raises the possibility that, with well-specified expert priors, EBBS may recover the true support more reliably than heuristic or relaxation-based methods in high signal-to-noise settings. Empirical validation on synthetic and real datasets is forthcoming.

\paragraph{Acknowledgments.}
None.

\paragraph{Code availability.}
Implementations of all experiments will be released in a public git repository upon completion of the empirical study.

\paragraph{Data availability.}
All datasets used in this study are publicly available in the git repository.

\printbibliography

\appendix

\section{Derivation of the Beta Disagreement Model Moments}
\label{app:beta_moments}

This appendix derives the mean and variance of the Beta-distributed expert reports specified in Eq.~\eqref{eq:expert_beta} and shows that the value $c_a = \mu_a(1 - \mu_a)/\eta^2 - 1$ is the unique positive value that yields mean $\mu_a$ and variance $\eta^2$ simultaneously.

Fix a feature $a$ and let $X \sim \operatorname{Beta}(\alpha, \beta)$ with shape parameters $\alpha, \beta > 0$. The mean and variance of the Beta distribution are
\begin{equation}
\label{eq:beta_standard_moments}
    \mathbb{E}[X] = \frac{\alpha}{\alpha + \beta}, \qquad \var(X) = \frac{\alpha \beta}{(\alpha + \beta)^2 (\alpha + \beta + 1)}.
\end{equation}
Let $c_a = \alpha + \beta$. Then $\alpha = \mu_a c_a$ and $\beta = (1 - \mu_a) c_a$, with $c_a > 0$ because $\alpha, \beta > 0$ by definition. Under this substitution Eq.~\eqref{eq:beta_standard_moments} simplifies to
\begin{align*}
    \mathbb{E}[X] &= \frac{\mu_a c_a}{c_a} = \mu_a, \\
    \var(X) &= \frac{\mu_a c_a \cdot (1 - \mu_a) c_a}{c_a^2 (c_a + 1)} = \frac{\mu_a(1 - \mu_a)}{c_a + 1}.
\end{align*}
The mean is therefore $\mu_a$ for any $c_a > 0$. To force the variance to a target value $\eta^2$, we set
\[
    \frac{\mu_a(1 - \mu_a)}{c_a + 1} = \eta^2
\]
and solve for $c_a$, yielding
\begin{equation}
\label{eq:c_a_derivation}
    c_a = \frac{\mu_a(1 - \mu_a)}{\eta^2} - 1,
\end{equation}
which matches the choice in Eq.~\eqref{eq:expert_beta}. Because the map $c_a \mapsto \mu_a(1-\mu_a)/(c_a + 1)$ is strictly decreasing on $c_a > 0$, this is the unique positive value of $c_a$ that achieves the target variance.

A proper Beta distribution requires $c_a > 0$, which by Eq.~\eqref{eq:c_a_derivation} is equivalent to
\[
    \frac{\mu_a(1 - \mu_a)}{\eta^2} > 1, \qquad \text{or,} \qquad \eta < \sqrt{\mu_a(1 - \mu_a)}.
\]
This bound is maximal at $\mu_a = 0.5$, where it permits $\eta < 0.5$, and tightens to $\eta = 0$ at the boundaries $\mu_a \in \{0, 1\}$. At those boundaries the Beta degenerates and the construction reverts to the deterministic specification $e_{i,a} = \mu_a$, consistent with the~$\lambda = 0$ case of Eq.~\eqref{eq:expert_mixture}.

\section{Aggregation of Partial Rankings}
\label{app:partial_rankings}

This appendix derives the coverage-weighted generalization of Eq.~\eqref{eq:e_bar_rank} that accommodates partial rankings, in which expert $i$ may rank only a subset $S_i \subseteq \{1, \ldots, p\}$ of the features. Let $m_i = |S_i|$ denote the number of features expert $i$ ranks. The derivation requires two adjustments. First, we rescale each expert's ranks into $[0, 1]$ on their own rank range rather than the global $p$, removing the scale mismatch between experts who rank different numbers of features. Second, we shrink the aggregated score toward $0.5$ in proportion to the fraction of experts who ranked the feature, preventing an expert with thin coverage from dominating the aggregated score on a feature only they rated.

\paragraph{Per-expert normalized score.}
Let $m_i \geq 2$ and let $\pi_i(a)$ denote the rank that expert $i$ assigns to feature $a \in S_i$. The per-expert normalized score is
\begin{equation}
\label{eq:e_tilde_rank_partial}
    \tilde{e}_{i,a} = \frac{m_i - \pi_i(a)}{m_i - 1}, \qquad \tilde{e}_{i,a} \in [0, 1].
\end{equation}
A feature ranked first by expert $i$ receives $\tilde{e}_{i,a} = 1$, and a feature ranked last among the $m_i$ features they assessed receives $\tilde{e}_{i,a} = 0$. The per-expert score decreases linearly with rank between these extremes. Experts with $m_i = 1$ provide no comparative information and are excluded.

\paragraph{Aggregation across experts.}
The aggregated score for feature $a$ is the average over all $N_{\text{e}}$ experts. Experts not in $\mathcal{R}_a$ contribute the neutral value $0.5$ and experts in $\mathcal{R}_a$ contribute their normalized score $\tilde{e}_{i,a}$, which yields

\begin{equation}
\label{eq:e_bar_rank_partial}
    \bar{e}_a = \frac{1}{N_{\text{experts}}} \left( 0.5 \, (N_{\text{experts}} - |\mathcal{R}_a|) + \sum_{i \in \mathcal{R}_a} \tilde{e}_{i,a} \right).
\end{equation}

Since this is a convex combination of values in $[0, 1]$, the aggregated score satisfies $\bar{e}_a \in [0, 1]$ by construction. When all experts rank feature $a$, the second term vanishes and $\bar{e}_a$ is the unweighted average of per-expert scores. When no expert ranks feature $a$, the first sum is empty and $\bar{e}_a = 0.5$, matching the convention used in the full-rankings case. Eq.~\eqref{eq:e_bar_rank_partial} can equivalently be written as a coverage-weighted shrinkage toward $0.5$. Setting $w_a = |\mathcal{R}_a|/N_{\text{experts}}$, the formula is equivalent to
\[
    \bar{e}_a = w_a \cdot \bar{e}_a^{\text{ranked}} + (1 - w_a) \cdot 0.5, \qquad \bar{e}_a^{\text{ranked}} = \frac{1}{|\mathcal{R}_a|} \sum_{i \in \mathcal{R}_a} \tilde{e}_{i,a},
\]
which is well-defined whenever $|\mathcal{R}_a| \geq 1$ and exhibits the intended interpretation: features with thin coverage are pulled toward $0.5$ in proportion to the missing fraction, so a feature ranked first by a single expert out of $N_{\text{e}}$ contributes $1/N_{\text{experts}}$ of its way toward $1$ rather than reaching it outright.

\paragraph{Reduction to the full-rankings case.}
Eq.~\eqref{eq:e_bar_rank_partial} reduces to Eq.~\eqref{eq:e_bar_rank} when all experts provide full rankings of all $p$ features. In that case $m_i = p$ for every $i$ and $|\mathcal{R}_a| = N_{\text{experts}}$ for every $a$. The neutral contribution term vanishes, and the per-expert score becomes $\tilde{e}_{i,a} = (p - \pi_i(a)) / (p - 1)$. Averaging across experts and simplifying gives
\[
    \bar{e}_a = \frac{1}{N_{\text{experts}}} \sum_{i=1}^{N_{\text{experts}}} \frac{p - \pi_i(a)}{p - 1} = \frac{p - \bar{\pi}_a}{p - 1} = 1 - \frac{\bar{\pi}_a - 1}{p - 1},
\]
where $\bar{\pi}_a$ is the mean rank as defined in Section~\ref{sec:agg_ordinal}. The right-hand side is exactly Eq.~\eqref{eq:e_bar_rank}. The partial-rankings aggregation therefore strictly generalizes the full-rankings aggregation.

\end{document}